\theoremstyle{plain}
\newtheorem{theorem}{Theorem}[section]
\newtheorem{proposition}[theorem]{Proposition}
\theoremstyle{definition}
\theoremstyle{remark}
\icmltitlerunning{Cross-Fusion Distance}
\begin{document}

\twocolumn[
  \icmltitle{Cross-Fusion Distance: A Novel Metric for Measuring Fusion and Separability Between Data Groups in Representation Space}
  
  \icmlsetsymbol{equal}{*}

  \begin{icmlauthorlist}
    \icmlauthor{Xiaolong Zhang}{yyy}
    \icmlauthor{Jianwei Zhang}{sch}
    \icmlauthor{Xubo Song}{yyy}
  \end{icmlauthorlist}

  \icmlaffiliation{yyy}{CEDAR, Knight Cancer Institute, Oregon Health and Science University, OR, USA}
  \icmlaffiliation{sch}{Brenden-Colson Center for Pancreatic Care, Oregon Health and Science University, OR, USA}

  \icmlcorrespondingauthor{Xubo Song}{songx@ohsu.edu}

  \icmlkeywords{Distance}

  \vskip 0.3in
]

\printAffiliationsAndNotice{}  

\begin{abstract}
Quantifying degrees of fusion and separability between data groups in representation space is a fundamental problem in representation learning, particularly under domain shift. A meaningful metric should capture \textit{fusion-altering factors} like geometric displacement between representation groups, whose variations change the extent of fusion, while remaining invariant to \textit{fusion-preserving factors} such as global scaling and sampling-induced layout changes, whose variations do not. Existing distributional distance metrics conflate these factors, leading to measures that are not informative of the true extent of fusion between data groups. We introduce \textbf{Cross-Fusion Distance (CFD)}, a principled measure that isolates fusion-altering geometry while remaining robust to fusion-preserving variations, with linear computational complexity. We characterize the invariance and sensitivity properties of CFD theoretically and validate them in controlled synthetic experiments. For practical utility on real-world datasets with domain shift, CFD aligns more closely with downstream generalization degradation than commonly used alternatives. Overall, CFD provides a theoretically grounded and interpretable distance measure for representation learning.

\end{abstract}

\section{Introduction}

In modern representation learning, quantifying the degrees of fusion and separation between groups of latent representations is a fundamental problem, for instance, for robust generalization under domain shift~\cite{gholami2023latent,tamang2025handling}. Learned representations are routinely compared across datasets collected under heterogeneous conditions—such as differing acquisition equipment, environments, or processing pipelines—where nuisance variations can obscure meaningful structure. Effective evaluation and alignment therefore rely on principled and interpretable distance measures that summarize discrepancies between high-dimensional latent distributions~\cite{heiser2020quantitative,horak2021topology,limbeck2024metric}. To this end, a wide range of distributional distances have been adopted. Optimal transport–based measures, most notably the Wasserstein distance (WD), are particularly appealing due to their geometric grounding and ability to yield finite distances even with limited or no support overlap~\cite{torres2021survey}, while kernel-based alternatives such as Maximum Mean Discrepancy (MMD) compare distributions in a chosen reproducing kernel Hilbert space~\cite{wang2021rethinking}. These metrics have been widely used to evaluate representation quality, domain alignment, and domain-shift correction across diverse machine learning applications.



Despite their empirical successes, existing distributional distances suffer from structural limitations, including restrictive distributional assumptions~\cite{borji2022pros}, indirect treatment of geometry via kernel embeddings~\cite{gretton2012kernel,wang2021rethinking}, and the aggregation of conceptually distinct discrepancies into a single scalar value~\cite{santambrogio2015optimal}. In particular, distances such as WD conflate \textbf{fusion-altering factors}—such as inter-group geometric displacement (location shifts) that change fusion—with \textbf{fusion-preserving variations}, including global scaling, sampling-induced layout variations, and dispersion-preserving deformations that leave fusion primarily unchanged~\cite{raghvendra2024new}. This conflation obscures the drivers of observed discrepancies, limiting the interpretability of WD as a faithful measure of fusion between latent representation groups.
Such conflation is particularly problematic when fusion-preserving variations are intrinsic properties of the representation rather than nuisance noise. Structural differences in data sampling, cluster shape, or manifold may encode task-relevant heterogeneity, yet these fusion-preserving variations are penalized in the same manner as fusion-altering shifts, potentially inducing costs to dominate the measured distance and produce large values even when two groups are substantially fused with a minimal separation. Similar pathologies arise in extreme-value metrics such as Hausdorff and Chamfer distances, which are highly sensitive to global scaling and outliers~\cite{chubet2025approximating,lin2024hyperbolic}. Overall, existing distances lack identifiable decomposition between fusion-altering factors and fusion-preserving variations, limiting their diagnostic value.


We introduce a novel distance measure, \textbf{Cross-Fusion Distance (CFD)}, for quantifying the levels of fusion or separation between latent representation groups. CFD is explicitly designed to isolate and quantify \textit{fusion-altering factors} between latent groups while remaining robust to \textit{fusion-preserving variations}. By effectively decoupling these distinct factors, CFD provides a more faithful measure of latent-space fusion under domain shift. This distinction is critical across a wide range of representation-learning settings, including but not limited to biomedical imaging~\cite{Gindra2025ImageAU, kothari2013removing}, where domain shift nuisance frequently occurs across datasets.

This paper makes the following contributions: (1) We identify and formalize a fundamental limitation of existing distributional distances for latent representations: the entanglement of fusion-altering factors with fusion-preserving variations, which undermines their effectiveness and interpretability as indicators of latent fusion and separation under domain shift. (2) We propose \textbf{Cross-Fusion Distance (CFD)}, an interpretable and scale-invariant measure derived from a variance decomposition of the latent point cloud. CFD requires no kernel choices, pairwise distance evaluations, or transport matching, and admits a closed-form computation with linear complexity in sample size and dimensionality. (3) Through theoretical analysis and experiments on synthetic and real-world biomedical datasets, we show that CFD responds monotonically and sensitively to fusion-altering factors while remaining robust to fusion-preserving variations, and exhibits improved calibration and stronger alignment with cross-domain performance degradation compared to existing measures. Together, these results establish CFD as a principled and computationally efficient distance for representation-level analysis under domain shift.

The full Python implementation of CFD is provided in the supplemental material for the review process, and will be released as open-source code on GitHub upon acceptance.


\section{Limitations of Existing Distance Measures}
\label{sec:relatedworks}

\subsection{Displacement–Deformation Conflation}
\label{sec:geometry-topology-coupling}

Many distance measures for comparing latent representation groups lack identifiability due to conflation of fusion-altering factors and fusion-preserving variations. The Wasserstein distance (WD) is a canonical example~\cite{torres2021survey}. Although WD is geometrically well founded and stable under support mismatch, its objective jointly penalizes translation and internal structural deformation, preventing a principled separation of these effects. 

The WD is defined as:


\begin{equation}
\begin{split}
W D_p&(\mu_A, \mu_B)\triangleq \\
& \left(\inf _{\pi \in \Pi(\mu_A, \mu_B)} \int \|z_A - z_B\|^p d \pi(z_A, z_B)\right)^{1/p},
\end{split}
\end{equation}

\noindent where $\mu_A$ and $\mu_B$ are the distributions of latent presentation clusters corresponding to group-A and group-B, respectively; $\pi$ denotes the coupling, and $\Pi$ represents the set of all couplings (i.e., transport plans). The point-to-point distance is defined as $||z_A-z_B||^p$, where $z_A$ and $z_B$ are points sampled from latent representation clusters of group-A and group-B; $p$ denotes the $p^{th}$ moment specifying the norm.

\begin{figure}[ht]
  \begin{center}
    \centerline{\includegraphics[width=0.75\columnwidth]{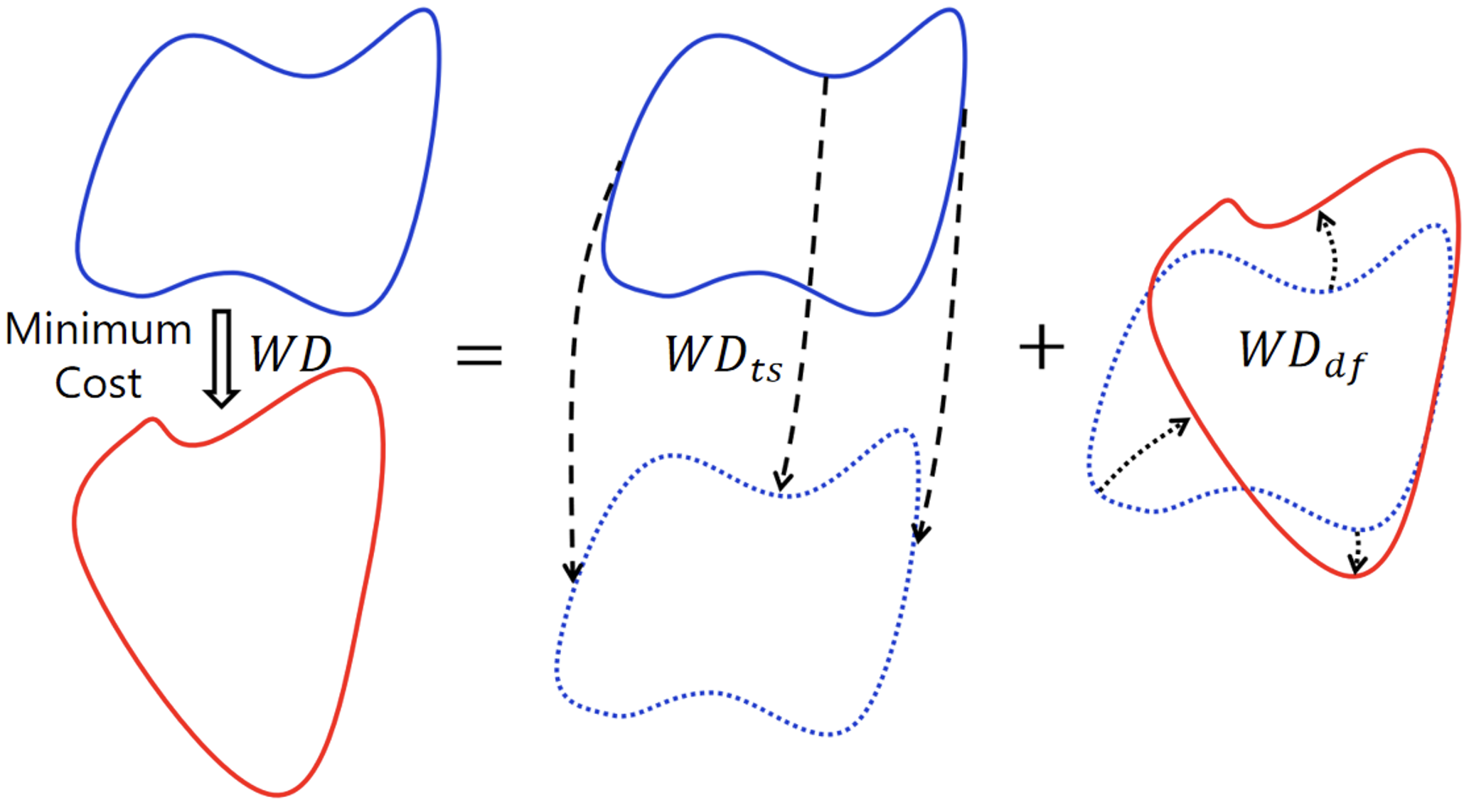}}
    \caption{Wasserstein distance decomposition: $WD_{ts}$ - the translational cost arising from geometric displacement; $WD_{df}$ - the deformation cost due to structural differences.}
    \label{fig:wd-decomp}
  \end{center}
\end{figure}

By definition, the WD represents the minimum cost of transporting one distribution (blue) to another (red), as illustrated in Figure~\ref{fig:wd-decomp}. This transport cost decomposes into a translational component $WD_{ts}$, arising from inter-group geometric displacement (black dashed lines), and a deformation component $WD_{df}$, reflecting structural differences between distributions (black dotted lines). As a result, WD jointly captures both displacement and deformation between latent clusters ~\cite{ambrosio2005gradient, otto2001geometry, takatsu2011wasserstein}:


\begin{equation}
W D = W D_{ts} + W D_{df}.
\end{equation}

Since the topology of latent representation clusters often encodes semantically meaningful structure inherent to the data meanings (e.g., biological variation), the \textit{deformation component} of the WD primarily reflects intrinsic semantics rather than nuisance effects such as domain shift. Consequently, WD conflates inter-group geometric displacement with inherent topological variation. Even when datasets share the same semantic context, intrinsic structural differences can induce substantial deformation costs comparable to translational costs, obscuring interpretation and limiting Wasserstein distance’s ability to isolate separation attributable to domain shift.


\subsection{Scale and Hyperparameter Dependence}

In domain shift settings, representation vector magnitudes may vary due to differences in normalization, encoding algorithms, or preprocessing, making scale-dependent distances (e.g., WD) difficult to compare across experiments. Many commonly used metrics are scale-dependent or rely on sensitive hyperparameter or kernel choices. By construction, distances such as Wasserstein, Hausdorff, and Chamfer are positively homogeneous: uniform rescaling of the latent space proportionally scales the distance value. As a result, variation in feature scale can dominate the measured discrepancy even when the relative fusion between groups is unchanged. Kernel-based measures such as MMD avoid explicit scale dependence but instead introduce kernel dependence, with values governed by kernel choice and bandwidth selection that may vary across experimental settings.


A common mitigation strategy for these metrics might be explicit normalization. However, such normalization is non-canonical and introduces additional design choices: normalization based on extreme statistics can amplify outlier sensitivity, while statistical moment-based normalization may obscure whether observed changes arise from fusion variation or preprocessing conventions. Moreover, normalization and kernel selection add computational overhead when combined with distances that already require quadratic pairwise evaluations or super-quadratic matching ~\cite{santambrogio2015optimal, cuturi2013sinkhorn, lange2023computation, lin2024hyperbolic}.


\subsection{Insensitivity to Level of Fusion}

Another critical yet underappreciated limitation of many distributional distances is their inability to identify dispersion-induced changes in cross-group fusion. When group centroids are fixed, increased within-group dispersion reduces effective separation by enlarging overlap, yet commonly used metrics remain invariant because their objectives depend primarily on absolute geometric or extreme-point distances. Consequently, measures such as Wasserstein, Hausdorff, and Chamfer can assign indistinguishable values to group pairs with substantially different fusion extent, revealing an identifiability failure with respect to dispersion-driven overlap.

\section{Cross-Fusion Distance (CFD)}
\label{sec:methods}


\paragraph{Motivation.}
As discussed in Section~\ref{sec:relatedworks}, many existing distributional measures entangle \emph{fusion-altering factors} with \emph{fusion-preserving variations}. In representation-learning settings where latent topology encodes semantically meaningful variation, incorrectly penalizing it can obscure the interpretation of inter-group fusion or separation. We therefore seek a measure that isolates fusion-altering factors while remaining insensitive to fusion-preserving variations.

\paragraph{Design Principles.} The construction of CFD follows three guiding principles. First, the measure should respond monotonically to fusion-altering factors, such as inter-group geometric displacement and within-group dispersion, whose variations modify the degree of fusion or separation. Second, it should remain stable to fusion-preserving variations, including global scaling, sampling-induced changes in point-cloud layout, and dispersion-preserving deformations or topology changes that do not affect the extent of fusion or separation. Third, the measure should require minimal distributional assumptions. CFD satisfies these criteria by contextualizing fusion contributions of inter-group displacements and within-group dispersions within the total variability of the fused representation space. This results in a scale-invariant ratio that isolates fusion-altering components within the fused representation space and is provably monotonic with respect to fusion-altering variations.

\paragraph{Setup.} Let $z_A = \{z_A^i\}_{i=1}^{n_A}$ and $z_B = \{z_B^j\}_{j=1}^{n_B}$ denote latent representations from groups $A$ and $B$, respectively. Define their empirical centroids as

\begin{equation}
\mu_A = \frac{1}{n_A}\sum_{i=1}^{n_A} z_A^i,
\qquad
\mu_B = \frac{1}{n_B}\sum_{j=1}^{n_B} z_B^j.
\end{equation}

Let $z_{AB} = z_A \cup z_B$ denote the additive union of the two groups, with weights

\begin{equation}
w_A = \frac{n_A}{n_A+n_B},
\qquad
w_B = \frac{n_B}{n_A+n_B},
\end{equation}

and fused centroid

\begin{equation}
\mu_{AB} = w_A \mu_A + w_B \mu_B.
\end{equation}

We define the empirical within-group dispersions as

\begin{equation}
\sigma_A^2 = \left\langle \|z_A - \mu_A\|^2 \right\rangle,
\qquad
\sigma_B^2 = \left\langle \|z_B - \mu_B\|^2 \right\rangle,
\end{equation}

and the dispersion of the fused cloud as

\begin{equation}
\sigma_{AB}^2 = \left\langle \|z_{AB} - \mu_{AB}\|^2 \right\rangle,
\end{equation}

where $\langle \cdot \rangle$ denotes the empirical average.

\paragraph{Variance Decomposition.} The dispersion of the fused cloud admits the following decomposition:

\begin{equation}
\begin{split}
\sigma_{AB}^2 = & w_A \sigma_A^2 + w_B \sigma_B^2 \\
& + w_A \|\mu_A - \mu_{AB}\|^2 + w_B \|\mu_B - \mu_{AB}\|^2.
\end{split}
\label{eq:variance-decomposition}
\end{equation}


The first two terms capture within-group structural variation, while the latter two terms quantify inter-group geometric displacement.

\paragraph{Cross-Fusion Score.} Motivated by Eq.~\eqref{eq:variance-decomposition}, we define the \emph{Cross-Fusion Score (CFS)} as

\begin{equation}
\mathrm{CFS}
=
\frac{w_A \sigma_A^2 + w_B \sigma_B^2}{\sigma_{AB}^2},
\qquad
\mathrm{CFS} \in (0,1].
\end{equation}

CFS measures the proportion of total variance explained by within-group dispersion, with larger values indicating greater overlap between latent groups.

\paragraph{Cross-Fusion Distance.} We define the \emph{Cross-Fusion Distance (CFD)} as

\begin{equation}
\mathrm{CFD} = -\log(\mathrm{CFS}),
\qquad
\mathrm{CFD} \in [0,+\infty).
\end{equation}

This logarithmic transformation maps overlap ratios to a distance scale, yielding a measure that increases monotonically as latent groups become more geometrically separated.

\paragraph{Properties and Interpretation.} From Eq.~\eqref{eq:variance-decomposition}, it follows that

\begin{equation}
\sigma_{AB}^2 \ge w_A \sigma_A^2 + w_B \sigma_B^2,
\end{equation}

with equality if and only if $\mu_A = \mu_B$. Consequently, $\mathrm{CFS} \le 1$ and $\mathrm{CFD} \ge 0$, where $\mathrm{CFD} = 0$ corresponds to perfect overlap of the two latent clouds.

Holding within-group dispersions fixed, CFD is a strictly increasing function of $\|\mu_A - \mu_B\|$, directly reflecting geometric displacement. Formal proofs and additional theoretical results are provided in Appendix~\ref{appendix:CFD}.





\paragraph{Computational Complexity.} The computational complexity of CFD is $O(n \cdot d)$, scaling linearly with the sample size $n$ and latent space dimensionality $d$. This efficiency arises from the closed-form of CFD, which requires only a single pass over the data. In contrast, many alternative distance measures rely on pairwise comparisons or explicit transport plans, resulting in substantially higher computational costs. For example, Sinkhorn-based optimal transport methods typically incur $O(n^2 \cdot d)$ complexity (or worse, depending on regularization and iteration counts), while geometric distances such as the Hausdorff distance similarly require quadratic pairwise evaluations. These theoretical differences are reflected empirically in Fig.~\ref{fig:runtime}, which shows that CFD scales much more favorably with sample size than competing measures. Table~\ref{tab:complexity} summarizes the computational costs of representative distance measures for reference ~\cite{santambrogio2015optimal, cuturi2013sinkhorn, lange2023computation,lin2024hyperbolic}.

\begin{figure}[ht]
  \vskip 0.2in
  \begin{center}
    \centerline{\includegraphics[width=0.75\columnwidth]{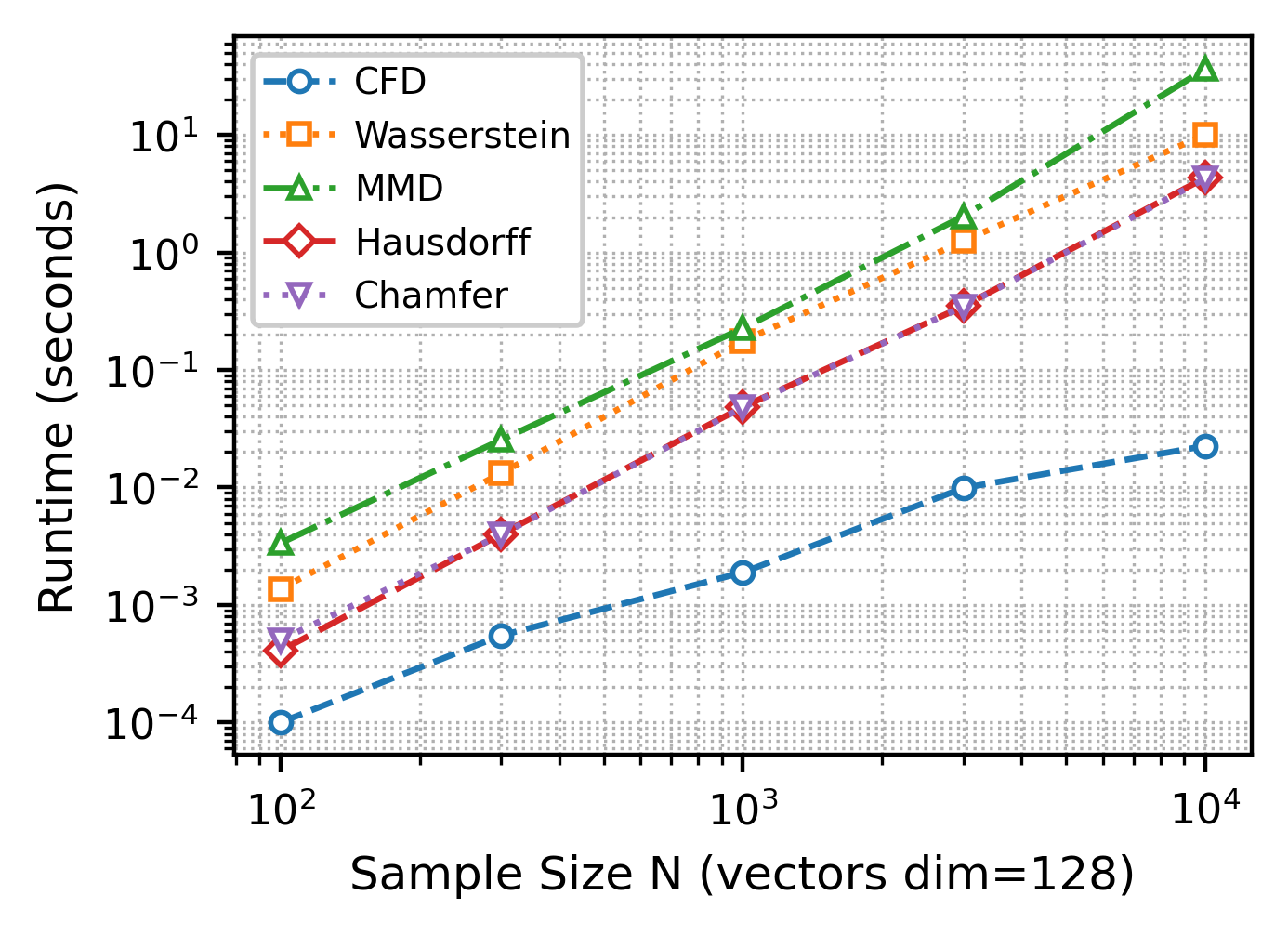}}
    \caption{Empirical runtime comparison of distance measures as a function of sample size.}
    \label{fig:runtime}
  \end{center}
\end{figure}

\begin{table}[t]
\centering
\caption{Computational complexity of different distance measures.}
\label{tab:complexity}
\begin{small}
\begin{tabular}{c c}
\toprule
\textbf{Distance Measures} & \textbf{Complexity} \\
\midrule
Wasserstein2 
& $O(n^3 \log n \cdot d)$ \\
Sinkhorn OT 
& $O(K n^2 \cdot d)$ \\
MMD 
& $O(n^2 \cdot d)$ \\
Hausdorff 
& $O(n^2 \cdot d)$ \\
Chamfer 
& $O(n^2 \cdot d)$ \\
CFD (ours) 
& $\mathbf{O(n \cdot d)}$ \\
\bottomrule
\end{tabular}
\end{small}
\end{table}


\section{Synthetic Experiments}

We evaluate the proposed Cross-Fusion Distance (CFD) on controlled synthetic datasets designed to isolate factors that may affect metric distance quantification, such as inter-group geometric displacement, within-group dispersion, global scaling, and dispersion-preserving topological deformation. By independently manipulating these factors, the experiments assess whether a metric avoids multi-factor conflation, remains stable under scale variation, and responds sensitively to geometric displacement and dispersion-induced changes in cross-group fusion. The results validate the theoretical properties established in Section~\ref{sec:methods} and contrast CFD with commonly used alternatives under well-specified conditions. Implementation details and extended parameter sweeps are provided in Appendix~\ref{app:synthetic}.


\subsection{Experimental Setup and Baseline Measures}

Across all synthetic experiments, we generate latent point clouds $A$ and $B$ in $\mathbb{R}^d$ and evaluate group separation using CFD, Wasserstein distance, MMD, Hausdorff distance, and symmetric Chamfer distance. Results are averaged over $R=50$ Monte Carlo trials with a shared seed schedule for reproducibility. Experiments span latent dimensionalities $d\in\{32,128\}$ and sample sizes $n\in\{300,1000\}$ per group. No feature normalization is applied, as CFD is invariant to isotropic scaling. Wasserstein distance is computed via exact optimal transport for $n=300$ and Sinkhorn approximation ($\varepsilon=0.1$) for $n=1000$. MMD uses an RBF kernel with the median heuristic, and Hausdorff and Chamfer distances follow standard definitions.

Each synthetic experiment is constructed to vary a single factor—geometric displacement, dispersion, scaling, topological deformation, or outlier contamination—while holding all other aspects fixed, enabling unambiguous attribution of metric responses. Unless otherwise stated, representative results for $(d=128, n=300)$ are presented in the main text, with additional dimensionalities and sample sizes reported in Appendix~\ref{app:synthetic}.

\begin{figure*}[ht]
  \vskip 0.2in
  \begin{center}
    \centerline{\includegraphics[width=1.7\columnwidth]{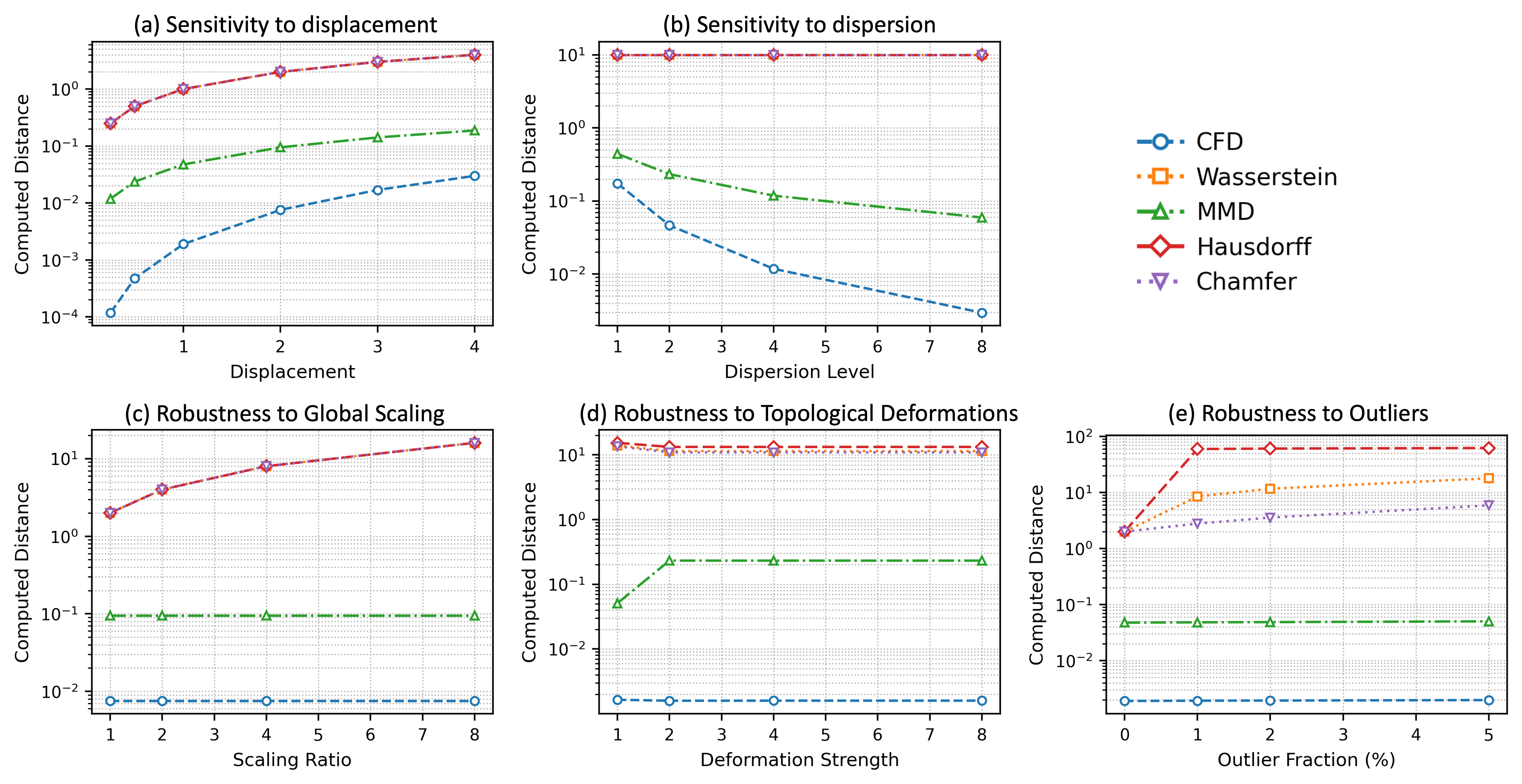}}
    \caption{Synthetic evaluation of distance measures. CFD primarily tracks geometric displacement and dispersion variation while remaining stable to global scaling, topological deformation, and outliers.}
    \label{fig:syn-res}
  \end{center}
\end{figure*}

\subsection{Sensitivity to Geometric Displacement}

We evaluate measure sensitivity to pure geometric displacement in the absence of structural or topological variation. Sensitivity is defined as the measure’s fractional change relative to that of translational displacement~\cite{chitnis2008determining,saltelli2008global,arriola2009sensitivity}. Two groups are sampled from identical Gaussian mixture distributions, with one group translated by $\Delta=\delta e_1$. Varying $\delta$ increases separation while preserving within-group structure.

As shown in Figure~\ref{fig:syn-res}(a), all measures increase monotonically with $\delta$, indicating correct qualitative behavior. By definition of sensitivity, steeper slopes (rather than absolute values) always indicate higher sensitivity. Wasserstein, Hausdorff, and symmetric Chamfer distances exhibit approximately linear growth (whose lines completely overlap in the figure), while MMD responds more weakly due to kernel smoothing. In contrast, CFD increases smoothly with a noticeably steeper slope, reflecting higher sensitivity to geometric displacement. CFD preserves geometric responsiveness while enhancing resolution to genuine spatial separation.

\subsection{Sensitivity to Dispersion Variations}

We then evaluate measure sensitivity to dispersion changes with group centroids held fixed. Increasing dispersion enlarges within-group spread and induces greater clouds' fusion without changing geometric displacement, isolating separability changes driven purely by variance inflation. 

As shown in Figure~\ref{fig:syn-res}(b), Wasserstein, Hausdorff, and symmetric Chamfer distances remain nearly constant across dispersion levels, exhibiting pronounced inertia and markedly weak responsiveness to within-group spread variation. MMD decreases gradually due to limited sensitivity to variance-driven overlap. In contrast, CFD decreases sharply with a substantially steeper slope, capturing fine-grained reductions in separability induced by increased dispersion, implying a superior sensitivity compared to other alternatives.

\subsection{Robustness to Global Scaling}

We evaluate measure robustness to global scaling, where all points are uniformly rescaled by a common factor. Such transformations preserve relative geometry and cloud fusion and therefore should not affect effective separability. As shown in Figure~\ref{fig:syn-res}(c), Wasserstein, Hausdorff, and symmetric Chamfer distances increase substantially with the scaling ratio, as they scale directly with coordinate magnitude despite unchanged relative structure. In contrast, CFD and MMD remain largely stable across scaling levels, reflecting their reliance on relative rather than absolute geometry. CFD maintains near-constant values, correctly withstanding uniform rescaling that does not alter separability.

\subsection{Robustness to Topological Deformation}

We next evaluate robustness to intrinsic topological variation in the absence of geometric displacement. The two groups share an identical centroid, while their internal structure is progressively deformed in a dispersion-preserving manner, altering local shape without changing overall overlap. Such deformations commonly arise in practical applications, particularly in biomedical imaging, where latent representation groups may maintain similar within-group spread across domain shifts while exhibiting substantial structural variation ~\cite{kothari2013removing, yu2023batch, phan2012multiscale, li2020deep}.

As shown in Figure~\ref{fig:syn-res}(d), CFD remains nearly constant across deformation levels, indicating invariance to topology changes that preserve fusion extent. In contrast, Wasserstein, Hausdorff, symmetric Chamfer, and MMD all change noticeably from deformation level 1 to 2, demonstrating susceptibility to intrinsic structural deformation even when centroids and overlap are largely unchanged. These results highlight that existing measures conflate shape variation with separation, whereas CFD discounts fusion-preserving deformations and better isolates effective group separability.

\subsection{Robustness to Outliers}

We evaluate robustness to outliers by replacing a small fraction of points with extreme values while preserving the main mass and overlap of the two clouds. Such perturbations should not materially affect effective separability. 

As shown in Figure~\ref{fig:syn-res}(e), Wasserstein, Hausdorff, and symmetric Chamfer distances increase sharply even for small outlier fractions due to their sensitivity to extreme pointwise distances. In contrast, CFD and MMD remain largely stable with mild slopes, indicating robustness to sparse outliers and a focus on dominant distributional structure rather than isolated extremes.

\subsection{Summary}

Across controlled synthetic settings, CFD consistently isolates effective group separability from confounding factors. It responds sensitively to genuine geometric displacement and dispersion-induced fusion, while remaining invariant to global scaling, dispersion-preserving topological deformation, and sparse outliers. In contrast, commonly used distance measures conflate separation with scale, shape changes, or extreme points, leading to spurious variation. Although MMD performs better than other baselines in several settings, it still shows limited sensitivity and responsiveness to geometric displacement, remains susceptible to topological deformation, depends on kernel and bandwidth selection, and incurs substantially higher computational cost. Overall, these results demonstrate that CFD provides a stable, parameter-free, and scalable metric for quantifying fusion and separability between latent representation groups.
\section{CFD for Biomedical Batch Effect}

Batch effects—systematic, non-biological variations introduced during data acquisition and preprocessing—pose a persistent challenge in biomedical machine learning. When medical images are embedded into latent spaces, such effects manifest as geometric displacements, often causing samples from different batches to occupy weakly overlapping or disjoint regions despite sharing the same underlying biology. Such misalignment degrades decision boundary stability and limits cross-institution generalization.

Effective batch-effect evaluation therefore requires distance measures that are sensitive to inter-batch geometric displacement while remaining robust to within-batch variability. However, most existing distributional distances quantify aggregated discrepancies between latent point clouds without effectively isolating relative displacement from structural deformation, which can lead to overestimation of batch-effect severity or obscure whether latent supports are meaningfully separated.

CFD explicitly isolates geometric displacement while discounting structural variation. We evaluate CFD on real-world histopathology datasets using two criteria: (i) calibration against a known ground-truth baseline, and (ii) alignment with cross-domain performance degradation.

\begin{table*}[t]
\centering
\caption{RDR results across datasets (Camelyon16 and MIDOG21). Columns report RDR for each dataset within the indicated pair.}
\label{tab:rdr-res}
\begin{small}
\begin{tabular}{lcccccccc}
\toprule
 & \multicolumn{2}{c}{\textbf{C16: RAD--UNI}} 
 & \multicolumn{2}{c}{\textbf{MIDOG: ACS--HS}} 
 & \multicolumn{2}{c}{\textbf{MIDOG: ACS--HX}} \\
\cmidrule(lr){2-3} \cmidrule(lr){4-5} \cmidrule(lr){6-7}
\textbf{Measures} 
 & \textbf{RAD} & \textbf{UNI} 
 & \textbf{ACS} & \textbf{HS} 
 & \textbf{ACS} & \textbf{HX} \\
\midrule
CFD           & \textbf{0.0021} & \textbf{0.0019} & \textbf{0.0020} & \textbf{0.0013} & \textbf{0.0030} & \textbf{0.0033} \\
Wasserstein   & 0.5356 & 0.5692 & 0.3467 & 0.2709 & 0.3617 & 0.4543 \\
MMD           & 0.0551 & 0.0529 & 0.0521 & 0.0470 & 0.0626 & 0.0676 \\
Hausdorff     & 0.9208 & 0.9769 & 0.8007 & 0.6975 & 0.7319 & 0.7759 \\
Chamfer       & 0.6236 & 0.6404 & 0.6039 & 0.4628 & 0.4359 & 0.5229 \\
\midrule
Ground Truth  & 0 & 0 & 0 & 0 & 0 & 0 \\
\bottomrule
\end{tabular}
\end{small}
\vskip -0.1in
\end{table*}

\subsection{Ground-Truth Validation on Histopathology Data}

A fundamental challenge in validating batch-effect measures on real-world data is the absence of absolute ground truth: real biomedical datasets rarely can provide a definitive reference for how far apart two domains \emph{should} be in representation space. To overcome this challenge, we construct a ground-truth–anchored evaluation using within-dataset splits as zero-distance references.

\paragraph{Within-dataset splits as zero-distance references.}
For a given dataset, we randomly partition samples from the same batch or dataset into two disjoint subsets. Because these subsets share identical staining procedures, scanner hardware, and preprocessing protocols, they are free of batch effects by construction. Consequently, the true batch-induced distance between the two subsets is zero. Any nonzero distance reported by a distance measure in this setting therefore reflects susceptibility to sampling variability, latent structure deformation, or density fluctuations, rather than the genuine separation between the two subsets.

\paragraph{Relative Distance Ratio (RDR).}
To contextualize within-dataset distances, we normalize them by distances measured across datasets with known batch effects. We define the \emph{Relative Distance Ratio (RDR)} as

\begin{equation}
    \mathrm{RDR}
    =
    \frac{d(A_1 \leftrightarrow A_2)}{d(A \leftrightarrow B)},
\end{equation}

\noindent where $A_1$ and $A_2$ are subsets from the same dataset, and $A$ and $B$ are distinct datasets with clear domain shift. Under the ground truth, $d(A_1 \leftrightarrow A_2) = 0$ and $d(A \leftrightarrow B) = \Delta > 0$, implying $\mathrm{RDR} = 0$. Distance measures that yield closer-to-zero RDR are therefore better aligned with the intended notion of batch-induced separations.

\paragraph{Datasets and protocol.}
We evaluate ground truth anchored behavior on two standard histopathology benchmarks: Camelyon16 and MIDOG21. Camelyon16~\cite{litjens20181399} contains H\&E-stained whole-slide images from Radboud University Medical Center (RAD) and University Medical Center Utrecht (UNI), which differ in staining protocols and scanning conditions. MIDOG21~\cite{aubreville2023mitosis} provides a controlled multi-scanner setting with slides acquired using Aperio ScanScope CS2 (ACS), Hamamatsu S360 (HS), and Hamamatsu XR (HX) scanners. Together, these datasets capture complementary sources of batch effects arising from institutional and hardware variation. Latent representations for all samples are extracted using the pathology foundation model Virchow~\cite{vorontsov2024foundation}.

For each dataset, zero-distance group pairs are constructed by randomly splitting samples from the same acquisition pipeline, and distances between these subsets form the numerator of RDR. The denominator is computed from dataset pairs with known batch effects: RAD–UNI for Camelyon16 and ACS–HS, ACS–HX for MIDOG21. To reduce variance from random partitioning, we repeat this procedure 50 times with independent splits and report RDR values averaged across repetitions.

\textbf{Results.} Table~\ref{tab:rdr-res} reports RDR values for CFD and several widely used distance measures (raw distances are provided in Appendix~C). Across all dataset pairs, CFD consistently produces near-zero RDR values on the order of $10^{-3}$, closely matching the ground-truth expectation and achieving values that are orders of magnitude smaller than all baselines. This indicates substantially superior calibration for batch-effect assessment.



Overall, this ground truth anchored evaluation shows that many commonly used distances are inadequately calibrated for biomedical batch-effect assessment. By isolating batch-induced geometric displacement while remaining robust to within-batch variability, CFD reliably assigns near-zero distance to batch-free splits while preserving sensitivity to true cross-dataset shifts.


\begin{table*}[t]
\centering
\caption{Average representation distances, downstream performance, and cross-domain degradation on MIDOG, averaged over ACS–HS360 and ACS–HXR pairs. The final row reports correlation between each distance and CDDR across normalization methods.}
\label{tab:ccdr-corr}
\begin{small}
\begin{tabular}{rccccccc}
\toprule
\textbf{Normalization} 
& \textbf{CFD} $\downarrow$
& \textbf{Wasserstein} $\downarrow$
& \textbf{MMD} $\downarrow$
& \textbf{Hausdorff} $\downarrow$
& \textbf{Chamfer} $\downarrow$
& \textbf{F1 (Cross)} $\uparrow$
& \textbf{CDDR} $\downarrow$ \\
\midrule
Unnormalized & 0.4282 & 36.9427 & 0.6585 & 37.9457& 22.5876 & 0.369 & 0.3469 \\
Macenko & 0.2243 & 30.9044 & 0.4981 & 23.1042& 35.3897& 0.482 & 0.1204 \\
StainFuser & 0.0459 & 20.2304 & 0.2385& 32.4889& 16.6811 & 0.439 & 0.1459 \\
LMC & 0.0254 & 13.0406 & 0.1787 & 18.6270 & 7.9630 & 0.626 & 0.0726 \\
\midrule
\textbf{Measures Corr.\ w/ CDDR} & \textbf{0.8868} & 0.7418 & 0.8142 & 0.6972& 0.6130 & -- & -- \\
\bottomrule
\end{tabular}
\end{small}
\end{table*}

\subsection{Cross-Domain Performance Degradation}

The practical value of a batch-effect measure ultimately lies in its ability to anticipate real-world generalization failures. In biomedical deployment, practitioners are primarily concerned with how severely performance degrades when models trained under one acquisition condition apply to another. Batch effects are problematic precisely because they induce unpredictable performance loss under domain shift.

\paragraph{Cross-domain degradation measure.}
To assess whether batch-effect measures anticipate real-world generalization failures, we quantify domain-shift–induced performance loss using the \emph{Cross-Domain (or Cross-Dataset) Degradation Rate (CDDR)} ~\cite{mbongo2025one, kalluri2024uda, wang2025grade}. CDDR measures the relative performance drop incurred when a model trained on one dataset is evaluated on another:

\begin{equation}
\mathrm{CDDR}
=
\frac{M_A - M_B}{M_A},
\end{equation}

\noindent where $M_A$ denotes within-dataset performance and $M_B$ denotes cross-dataset performance, measured using standard task metrics such as AUC or F1 score. CDDR isolates the relative impact of domain shift and abstracts away dataset-specific task difficulty and baseline performance variation, thus providing a comparable and interpretable measure of cross-domain generalization and degradation.

\paragraph{Normalization as controlled batch-effect modulation.}
We use histopathology normalization as a controlled intervention to induce systematic variation in domain shift, producing a set of representation spaces with progressively reduced inter-domain separation. Specifically, we consider four normalization settings: \emph{(i) Unnormalized}, which preserves raw batch effects; \emph{(ii) Macenko}~\cite{macenko2009method}, which aligns global stain statistics; \emph{(iii) StainFuser}~\cite{jewsbury2024stainfuser}, which maps images to a shared appearance space via diffusion-based deep learning; and \emph{(iv) LMC}~\cite{anonymous2026lmc}, a normalization method that promotes stain-invariant latent representations through contrastive learning. Together, these settings define a monotonic reduction in batch-induced discrepancy, enabling a direct test of whether representation distances track downstream generalization.


For each normalization setting, we compute the following two values: (i) representation distances using CFD and alternative metrics, and (ii) CDDR from models trained on one dataset and evaluated on another. Results are averaged over the two MIDOG scanner pairs (ACS–HS and ACS–HX). We then compute the correlation between distances and CDDR across normalization settings, assessing how well each measure tracks domain-shift-induced performance degradation.


\paragraph{Results.} Table~\ref{tab:ccdr-corr} summarizes representation distances, downstream cross-domain performance, and degradation rates across normalization settings, averaged over both MIDOG scanner pairs. As normalization strength increases (better normalization method applied), all distance measures exhibit a decreasing trend, reflecting progressively reduced batch-induced separation in latent space. In parallel, downstream cross-domain F1 scores increase monotonically, while CDDR decreases substantially, indicating improved generalization under reduced domain shift.

The bottom row of Table~\ref{tab:ccdr-corr} reports the correlation between each distance measure and CDDR across normalization settings. CFD achieves the highest correlation ($0.8868$), substantially outperforming Wasserstein, MMD, Hausdorff, and Chamfer distances. This result indicates that CFD most accurately captures the aspect of latent separation that governs real-world performance degradation under domain shift. In particular, CFD consistently preserves the correct \emph{ordering} of batch-effect severity across normalization settings in a manner that aligns closely with observed generalization outcomes. This property is critical for deployment-oriented batch-effect assessment, where practitioners must anticipate performance degradation rather than merely quantify representational differences.

Together, these results show that CFD is both well calibrated under ground-truth-anchored tests and predictive of real-world cross-domain performance degradation, establishing a principled link between latent geometry and deployment robustness in biomedical representation learning.

\section{Conclusion}

We introduced \textbf{Cross-Fusion Distance (CFD)}, a principled and interpretable measure for quantifying latent fusion and separability under domain shift by explicitly decoupling fusion-altering factors from fusion-preserving variations. Derived from a variance decomposition of the fused latent cloud, CFD is scale-invariant, computationally efficient, and admits a closed-form evaluation with linear complexity. Synthetic experiments and real-world histopathology benchmarks further showed that CFD is well calibrated and aligns closely with cross-domain performance degradation, outperforming commonly used distance measures. Overall, CFD provides a reliable measure for representation analysis and generalization assessment under domain shift. Future work may extend CFD to multi-group and hierarchical settings, study its behavior under temporal representation drift, and integrate it into representation learning objectives to directly regularize inter-group separation while preserving within-group structures.


\section*{Impact Statements}

This paper presents work whose goal is to advance the field of machine learning and deep learning. There are many potential societal consequences of our work, none of which we feel must be specifically highlighted here.

\bibliography{ref}
\bibliographystyle{icml2026}


\newpage
\appendix
\onecolumn

\renewcommand{\thefigure}{A.\arabic{figure}}
\setcounter{figure}{0}
\renewcommand{\thetable}{A.\arabic{table}}
\setcounter{table}{0}

\section{Theoretical Properties of Cross-Fusion Distance}
\label{appendix:CFD}

In this appendix, we provide formal derivations and theoretical properties of the Cross-Fusion Distance (CFD) introduced in \textcolor{red}{Section~3.1}.

\subsection{Notation}

Let $z_A = \{z_A^i\}_{i=1}^{n_A}$ and $z_B = \{z_B^j\}_{j=1}^{n_B}$ denote latent representations from groups $A$ and $B$, respectively.

Define empirical centroids

\begin{equation}
\mu_A = \frac{1}{n_A} \sum_{i=1}^{n_A} z_A^i,
\qquad
\mu_B = \frac{1}{n_B} \sum_{j=1}^{n_B} z_B^j.
\end{equation}

Let $z_{AB} = z_A \cup z_B$ denote the additive union of the two groups, with total sample size $n = n_A + n_B$ and weights

\begin{equation}
w_A = \frac{n_A}{n},
\qquad
w_B = \frac{n_B}{n}.
\end{equation}

The centroid of the fused cloud is

\begin{equation}
\mu_{AB} = w_A \mu_A + w_B \mu_B.
\end{equation}

Define within-group dispersions

\begin{equation}
\sigma_A^2 = \frac{1}{n_A} \sum_{i=1}^{n_A} \|z_A^i - \mu_A\|^2,
\qquad
\sigma_B^2 = \frac{1}{n_B} \sum_{j=1}^{n_B} \|z_B^j - \mu_B\|^2,
\end{equation}

and fused dispersion

\begin{equation}
\sigma_{AB}^2
=
\frac{1}{n}
\sum_{k=1}^{n}
\|z_{AB}^k - \mu_{AB}\|^2.
\end{equation}

---

\subsection{Variance Decomposition}

\begin{proposition}[Variance Decomposition]
\label{prop:variance-decomposition}

The dispersion of the fused latent cloud admits the decomposition

\begin{equation}
\sigma_{AB}^2
=
w_A \sigma_A^2
+
w_B \sigma_B^2
+
w_A \|\mu_A - \mu_{AB}\|^2
+
w_B \|\mu_B - \mu_{AB}\|^2.
\end{equation}
\end{proposition}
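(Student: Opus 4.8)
The plan is to prove the claimed identity by a direct computation that splits the sum over the fused cloud into its two constituent groups and then, within each group, inserts and cancels the group centroid via the standard bias–variance (parallel-axis) expansion. First I would write
\begin{equation}
\sigma_{AB}^2
=
\frac{1}{n}\sum_{k=1}^{n}\|z_{AB}^k - \mu_{AB}\|^2
=
\frac{1}{n}\sum_{i=1}^{n_A}\|z_A^i - \mu_{AB}\|^2
+
\frac{1}{n}\sum_{j=1}^{n_B}\|z_B^j - \mu_{AB}\|^2,
\end{equation}
using that $z_{AB}$ is the (multiset) union of $z_A$ and $z_B$ and $n = n_A + n_B$. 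This reduces the problem to analyzing each group's contribution separately.

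Next, for the group-$A$ term I would insert $\mu_A$ and expand:
\begin{equation}
\frac{1}{n}\sum_{i=1}^{n_A}\|z_A^i - \mu_{AB}\|^2
=
\frac{1}{n}\sum_{i=1}^{n_A}\Bigl(\|z_A^i - \mu_A\|^2 + 2\langle z_A^i - \mu_A,\ \mu_A - \mu_{AB}\rangle + \|\mu_A - \mu_{AB}\|^2\Bigr).
\end{equation}
The cross term vanishes because $\sum_{i=1}^{n_A}(z_A^i - \mu_A) = 0$ by definition of the empirical centroid $\mu_A$; the first term gives $\tfrac{n_A}{n}\sigma_A^2 = w_A\sigma_A^2$ after recognizing $\sigma_A^2 = \tfrac{1}{n_A}\sum_i\|z_A^i - \mu_A\|^2$; and the last term gives $\tfrac{n_A}{n}\|\mu_A - \mu_{AB}\|^2 = w_A\|\mu_A - \mu_{AB}\|^2$ since the summand is constant in $i$. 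The identical argument applied to the group-$B$ term yields $w_B\sigma_B^2 + w_B\|\mu_B - \mu_{AB}\|^2$. Summing the two contributions gives exactly the claimed decomposition.

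I do not expect any genuine obstacle here: the result is the weighted parallel-axis theorem, and the only thing to be careful about is bookkeeping—specifically that the normalization is $1/n$ throughout (not $1/n_A$ or $1/n_B$ inside the split), which is what produces the weights $w_A, w_B$ rather than bare sums. If one wanted an even shorter route, one could instead invoke the general law of total variance for a two-component mixture with weights $w_A, w_B$ and observe that $\mu_{AB} = w_A\mu_A + w_B\mu_B$ is precisely the mixture mean, so that the ``between-group'' variance is $w_A\|\mu_A - \mu_{AB}\|^2 + w_B\|\mu_B - \mu_{AB}\|^2$ and the ``within-group'' variance is $w_A\sigma_A^2 + w_B\sigma_B^2$; but the elementary expansion above is self-contained and avoids citing external machinery. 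As a sanity check I would verify the stated equality condition: since $\mu_{AB}$ lies on the segment between $\mu_A$ and $\mu_B$, the two displacement terms sum to zero iff $\mu_A = \mu_{AB} = \mu_B$, i.e. iff $\mu_A = \mu_B$, which matches the ``equality iff $\mu_A = \mu_B$'' remark following the proposition.
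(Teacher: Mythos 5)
Your proposal is correct and follows essentially the same route as the paper's proof: split the fused sum over the two groups, insert the group centroid, expand the square, and note that the cross term vanishes because $\sum_{i}(z_A^i-\mu_A)=0$. The additional remarks (the law-of-total-variance shortcut and the equality-condition sanity check) are consistent with, but not needed beyond, what the paper does.
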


\begin{proof}

We expand the fused dispersion as

\begin{align}
\sigma_{AB}^2
&=
\frac{1}{n}
\left(
\sum_{i=1}^{n_A} \|z_A^i - \mu_{AB}\|^2
+
\sum_{j=1}^{n_B} \|z_B^j - \mu_{AB}\|^2
\right).
\end{align}

Consider the first term. Writing $z_A^i - \mu_{AB} = (z_A^i - \mu_A) + (\mu_A - \mu_{AB})$, we obtain

\begin{align}
\|z_A^i - \mu_{AB}\|^2
&=
\|z_A^i - \mu_A\|^2
+
\|\mu_A - \mu_{AB}\|^2
+
2 (z_A^i - \mu_A)^\top (\mu_A - \mu_{AB}).
\end{align}

Averaging over $i$ eliminates the cross term since $\sum_{i=1}^{n_A} (z_A^i - \mu_A) = 0$.
Thus,

\begin{equation}
\frac{1}{n_A} \sum_{i=1}^{n_A} \|z_A^i - \mu_{AB}\|^2
=
\sigma_A^2 + \|\mu_A - \mu_{AB}\|^2.
\end{equation}

An analogous argument applies to the $B$ group. Substituting both terms and weighting by $w_A$ and $w_B$ yields the stated decomposition.
\end{proof}

---

\subsection{Bounds and Validity of CFD}

\begin{proposition}[Bounds of Cross-Fusion Score]
\label{prop:cfs-bounds}

The Cross-Fusion Score

\[
\mathrm{CFS}
=
\frac{w_A \sigma_A^2 + w_B \sigma_B^2}{\sigma_{AB}^2}
\]

satisfies

\[
0 < \mathrm{CFS} \le 1.
\]
\end{proposition}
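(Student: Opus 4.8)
The plan is to establish the two inequalities separately, both as direct consequences of the Variance Decomposition (Proposition~\ref{prop:variance-decomposition}). Write the decomposition as
\[
\sigma_{AB}^2 = \bigl(w_A \sigma_A^2 + w_B \sigma_B^2\bigr) + \bigl(w_A \|\mu_A - \mu_{AB}\|^2 + w_B \|\mu_B - \mu_{AB}\|^2\bigr),
\]
so that $\sigma_{AB}^2$ equals the CFS numerator plus a nonnegative ``displacement'' remainder. Since $w_A, w_B \ge 0$ and squared norms are nonnegative, the remainder is $\ge 0$, hence $\sigma_{AB}^2 \ge w_A \sigma_A^2 + w_B \sigma_B^2$, which gives $\mathrm{CFS} \le 1$ after dividing (the denominator being positive, handled below).

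For the lower bound, I would argue that $\sigma_{AB}^2 > 0$ strictly, so CFS is well-defined and positive. The cleanest route is to assume the degenerate case is excluded: if $\sigma_{AB}^2 = 0$ then every point in the fused cloud coincides with $\mu_{AB}$, forcing $z_A^i = z_B^j = \mu_{AB}$ for all $i,j$, so both groups are single-point masses and the whole setup is trivial; under the standing assumption that the point clouds are not both constant, $\sigma_{AB}^2 > 0$. Given $\sigma_{AB}^2 > 0$ and the numerator $w_A \sigma_A^2 + w_B \sigma_B^2 \ge 0$, we get $\mathrm{CFS} \ge 0$; strict positivity then follows because $w_A \sigma_A^2 + w_B \sigma_B^2 = 0$ would force $\sigma_A^2 = \sigma_B^2 = 0$ (each group constant), whence $\sigma_{AB}^2 = w_A\|\mu_A - \mu_{AB}\|^2 + w_B\|\mu_B-\mu_{AB}\|^2$; combined with the decomposition's equality condition ($\mu_A = \mu_B$ iff the displacement term vanishes) one checks this still leaves $\sigma_{AB}^2 > 0$ unless $\mu_A = \mu_B$ as well, which again collapses to the trivial all-constant case already excluded. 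Hence $\mathrm{CFS} > 0$.

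I would also record the equality characterization as part of the same argument: $\mathrm{CFS} = 1$ iff the displacement remainder vanishes iff $\mu_A = \mu_{AB} = \mu_B$, i.e. iff $\mu_A = \mu_B$ (using $\mu_{AB} = w_A\mu_A + w_B\mu_B$ with $w_A + w_B = 1$). This is the content already asserted in the ``Properties and Interpretation'' paragraph of the main text, so it is natural to fold it in here.

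The only genuine subtlety — the ``main obstacle'' in an otherwise immediate proof — is the strict lower bound $\mathrm{CFS} > 0$ rather than merely $\ge 0$: it requires ruling out $\sigma_{AB}^2 = 0$, which is exactly the pathological case where both groups degenerate to the same single point. I expect the paper handles this by an implicit nondegeneracy assumption (the latent clouds are not all identical points), and I would state that assumption explicitly at the start of the proof so that the division by $\sigma_{AB}^2$ is justified and the strict inequality holds; everything else is one line from Proposition~\ref{prop:variance-decomposition}.
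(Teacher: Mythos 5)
Your proof follows essentially the same route as the paper's: both inequalities are read off from the variance decomposition, the upper bound from nonnegativity of the displacement terms, the lower bound from a nondegeneracy argument. You are in fact more explicit than the paper, which disposes of strict positivity in a single sentence (``Strict positivity follows from $\sigma_{AB}^2 > 0$ unless all points coincide'') and also records the equality case $\mu_A = \mu_B$.

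One caveat on the strict lower bound: your case analysis does not actually close. If $w_A\sigma_A^2 + w_B\sigma_B^2 = 0$ but $\mu_A \neq \mu_B$ --- i.e., each group is a single repeated point and the two points differ --- then the numerator of $\mathrm{CFS}$ is zero while $\sigma_{AB}^2 > 0$, so $\mathrm{CFS} = 0$ exactly. This configuration is not excluded by the nondegeneracy assumption you state (that the fused cloud is not a single point), so the final ``Hence $\mathrm{CFS} > 0$'' does not follow from what precedes it; your own intermediate observation that this case ``still leaves $\sigma_{AB}^2 > 0$'' is precisely what makes the ratio equal to zero rather than positive. The assumption you actually need is that at least one group has strictly positive within-group dispersion, i.e., $\sigma_A^2 + \sigma_B^2 > 0$. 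The paper's proof glosses over the same point, so this is a defect of the proposition as stated rather than of your approach specifically; but since your stated goal was to make the nondegeneracy hypothesis explicit, it should be this one.
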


\begin{proof}
From Proposition~\ref{prop:variance-decomposition}, the displacement terms $w_A \|\mu_A - \mu_{AB}\|^2$ and $w_B \|\mu_B - \mu_{AB}\|^2$ are nonnegative, implying

\[
\sigma_{AB}^2
\ge
w_A \sigma_A^2 + w_B \sigma_B^2.
\]

Strict positivity follows from $\sigma_{AB}^2 > 0$ unless all points coincide. Equality holds if and only if $\mu_A = \mu_B$.
\end{proof}

---

\subsection{Properties of Cross-Fusion Distance}

\begin{proposition}[Non-negativity and Zero Case]
\label{prop:cfd-nonneg}

The Cross-Fusion Distance

\[
\mathrm{CFD} = -\log(\mathrm{CFS})
\]

satisfies

\[
\mathrm{CFD} \ge 0,
\]

with $\mathrm{CFD} = 0$ if and only if $\mu_A = \mu_B$.
\end{proposition}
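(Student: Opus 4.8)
The plan is to derive $\mathrm{CFD} \ge 0$ and the equality characterization directly from the two preceding propositions, treating this as an immediate corollary rather than an independent argument. The key observation is that $\mathrm{CFD} = -\log(\mathrm{CFS})$ is a strictly decreasing function of $\mathrm{CFS}$ on $(0,1]$, so statements about $\mathrm{CFD}$ translate into statements about $\mathrm{CFS}$.

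First I would invoke Proposition~\ref{prop:cfs-bounds}, which gives $0 < \mathrm{CFS} \le 1$, so that $\log(\mathrm{CFS})$ is well-defined (the argument is strictly positive) and satisfies $\log(\mathrm{CFS}) \le 0$. Applying the negation yields $\mathrm{CFD} = -\log(\mathrm{CFS}) \ge 0$, establishing non-negativity. Second, for the equality case, I would chain the following equivalences: $\mathrm{CFD} = 0 \iff \log(\mathrm{CFS}) = 0 \iff \mathrm{CFS} = 1$, where the last step uses injectivity of $\log$. Then I would appeal to the equality condition already recorded in Proposition~\ref{prop:cfs-bounds} (itself a consequence of the variance decomposition in Proposition~\ref{prop:variance-decomposition}): $\mathrm{CFS} = 1$ holds if and only if the displacement terms $w_A\|\mu_A - \mu_{AB}\|^2 + w_B\|\mu_B - \mu_{AB}\|^2$ vanish. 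Since $w_A, w_B > 0$, this forces $\mu_A = \mu_{AB}$ and $\mu_B = \mu_{AB}$, hence $\mu_A = \mu_B$; conversely, if $\mu_A = \mu_B$ then $\mu_{AB} = \mu_A = \mu_B$ and both displacement terms are zero, so $\mathrm{CFS} = 1$.

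There is essentially no hard part here; the only mild subtlety worth a sentence is confirming that $\mathrm{CFS} > 0$ strictly (so the logarithm does not diverge), which holds whenever $\sigma_{AB}^2 > 0$, i.e. whenever the fused cloud is not a single degenerate point — an implicit non-degeneracy assumption I would state explicitly. If one wanted the equality condition $\mu_A = \mu_B$ to additionally coincide with "perfect overlap of the two clouds" as claimed in the main text, one would note that $\mu_A = \mu_B$ alone does not imply the point sets coincide; the precise statement being proved is the centroid-level one, and I would phrase the conclusion accordingly to avoid overclaiming. The whole proof is three or four lines.
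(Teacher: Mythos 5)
Your proposal is correct and follows the same route as the paper, whose entire proof is the one-liner ``follows directly from Proposition~\ref{prop:cfs-bounds} and the monotonicity of the logarithm''; you simply spell out the chain $\mathrm{CFD}=0\iff\mathrm{CFS}=1\iff$ displacement terms vanish $\iff\mu_A=\mu_B$ that the paper leaves implicit. Your two side remarks --- that $\sigma_{AB}^2>0$ is a tacit non-degeneracy assumption, and that $\mu_A=\mu_B$ is a centroid-level condition rather than literal ``perfect overlap'' of the clouds --- are both accurate and sharper than the paper's own phrasing.
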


\begin{proof}
The result follows directly from Proposition~\ref{prop:cfs-bounds} and the monotonicity of the logarithm.
\end{proof}

---

\begin{proposition}[Monotonicity in Geometric Displacement]
\label{prop:cfd-monotonic}

Holding $\sigma_A^2$ and $\sigma_B^2$ fixed, $\mathrm{CFD}$ is a strictly increasing function of $\|\mu_A - \mu_B\|$.
\end{proposition}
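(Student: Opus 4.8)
The plan is to collapse $\mathrm{CFD}$ into an explicit function of the single scalar $\Delta \coloneqq \|\mu_A - \mu_B\|$ and then conclude by composing strictly increasing maps. First I would simplify the two displacement terms in the decomposition of Proposition~\ref{prop:variance-decomposition}. Because $\mu_{AB} = w_A\mu_A + w_B\mu_B$ with $w_A + w_B = 1$, direct substitution gives $\mu_A - \mu_{AB} = w_B(\mu_A - \mu_B)$ and $\mu_B - \mu_{AB} = -\,w_A(\mu_A - \mu_B)$, so
\begin{equation}
w_A\|\mu_A - \mu_{AB}\|^2 + w_B\|\mu_B - \mu_{AB}\|^2
= \bigl(w_A w_B^2 + w_B w_A^2\bigr)\Delta^2
= w_A w_B\,\Delta^2 .
\end{equation}

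Writing $S \coloneqq w_A\sigma_A^2 + w_B\sigma_B^2$ for the (fixed) within-group contribution, and holding the weights $w_A, w_B$ fixed as well, Proposition~\ref{prop:variance-decomposition} becomes $\sigma_{AB}^2 = S + w_A w_B \Delta^2$, hence
\begin{equation}
\mathrm{CFS} = \frac{S}{S + w_A w_B \Delta^2},
\qquad
\mathrm{CFD} = -\log(\mathrm{CFS}) = \log\!\left(1 + \frac{w_A w_B}{S}\,\Delta^2\right).
\end{equation}
Under the standing nondegeneracy assumption $\sigma_{AB}^2 > 0$ used in Propositions~\ref{prop:cfs-bounds}--\ref{prop:cfd-nonneg}, we have $S > 0$ (at least one group has positive dispersion), and since $w_A, w_B > 0$ the constant $c \coloneqq w_A w_B / S$ is strictly positive. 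Then $\Delta \mapsto \Delta^2$ is strictly increasing on $[0,\infty)$, $t \mapsto 1 + ct$ is strictly increasing, and $\log$ is strictly increasing on $(0,\infty)$, so $\Delta \mapsto \log(1 + c\Delta^2)$ is strictly increasing in $\Delta \ge 0$; equivalently $\frac{d}{d\Delta}\mathrm{CFD} = \frac{2c\Delta}{1 + c\Delta^2} > 0$ for $\Delta > 0$.

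There is no genuinely hard step here — the result is essentially algebraic once the displacement terms are rewritten — so the only things worth flagging are two edge cases. If $S = 0$ (all points within each group coincide) the formula degenerates: $\mathrm{CFD} = 0$ at $\Delta = 0$ and $\mathrm{CFD} = +\infty$ for every $\Delta > 0$, which is exactly why the nondegeneracy hypothesis is needed. And ``holding $\sigma_A^2, \sigma_B^2$ fixed'' should be read as also holding the group sizes (equivalently $w_A, w_B$) fixed, since otherwise the constant $c$ changes and the clean one-variable reduction no longer applies.
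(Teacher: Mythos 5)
Your proof is correct and follows essentially the same route as the paper's (reduce to the variance decomposition, note the numerator of $\mathrm{CFS}$ is fixed, and compose monotone maps). In fact your explicit identity $w_A\|\mu_A-\mu_{AB}\|^2 + w_B\|\mu_B-\mu_{AB}\|^2 = w_A w_B\|\mu_A-\mu_B\|^2$, the resulting closed form $\mathrm{CFD}=\log\!\bigl(1+\tfrac{w_A w_B}{S}\Delta^2\bigr)$, and the caveats about $S>0$ and fixed weights make rigorous a step the paper's two-sentence proof leaves implicit.
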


\begin{proof}
From Proposition~\ref{prop:variance-decomposition}, $\sigma_{AB}^2$ increases monotonically with $\|\mu_A - \mu_B\|^2$ through the displacement terms. Since the numerator of $\mathrm{CFS}$ remains constant under fixed within-group dispersions, $\mathrm{CFS}$ decreases monotonically and $\mathrm{CFD}$ increases monotonically with centroid separation.
\end{proof}

---

\begin{proposition}[Limit Cases]
\label{prop:cfd-limits}

The following limits hold:

\begin{itemize}
\item If $\mu_A = \mu_B$, then $\mathrm{CFD} = 0$.
\item If $\|\mu_A - \mu_B\| \to \infty$, then $\mathrm{CFD} \to \infty$.
\end{itemize}

\end{proposition}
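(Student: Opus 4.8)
The plan is to prove the two limits separately, both as short corollaries of the results already established. The first bullet is immediate: \Cref{prop:cfd-nonneg} already asserts that $\mathrm{CFD} = 0$ if and only if $\mu_A = \mu_B$, so the implication ``$\mu_A = \mu_B \implies \mathrm{CFD} = 0$'' needs no further argument. The substance of the proposition is therefore the divergence statement, and for that the plan is to obtain an explicit closed form for $\mathrm{CFS}$ as a function of $\|\mu_A - \mu_B\|$ and then read off the limit.

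For the second bullet, I would first make the dependence of $\sigma_{AB}^2$ on the centroid gap fully explicit. Substituting $\mu_{AB} = w_A\mu_A + w_B\mu_B$ into the displacement terms of \Cref{prop:variance-decomposition} and using $w_A + w_B = 1$ gives $\mu_A - \mu_{AB} = w_B(\mu_A - \mu_B)$ and $\mu_B - \mu_{AB} = -w_A(\mu_A - \mu_B)$, so that
\[
w_A\|\mu_A - \mu_{AB}\|^2 + w_B\|\mu_B - \mu_{AB}\|^2 = \big(w_A w_B^2 + w_B w_A^2\big)\|\mu_A - \mu_B\|^2 = w_A w_B\,\|\mu_A - \mu_B\|^2 .
\]
Writing $S \coloneqq w_A\sigma_A^2 + w_B\sigma_B^2$ for the numerator of $\mathrm{CFS}$ and $D \coloneqq \|\mu_A - \mu_B\|^2$, the variance decomposition then collapses to $\sigma_{AB}^2 = S + w_A w_B D$, and hence
\[
\mathrm{CFS} = \frac{S}{S + w_A w_B D}.
\]

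The conclusion follows by letting $D \to \infty$ with $S$, $w_A$, and $w_B$ held fixed: the right-hand side tends to $0^+$, so $\mathrm{CFD} = -\log(\mathrm{CFS}) \to +\infty$. I do not expect a genuine obstacle here; the only point requiring care is to state the standing convention—already implicit in \Cref{prop:cfd-monotonic}—that the within-group dispersions $\sigma_A^2,\sigma_B^2$ and the mixing weights $w_A,w_B$ are fixed as $\|\mu_A-\mu_B\|$ varies, so that $S$ stays bounded and strictly positive; otherwise the ratio $S/(S + w_A w_B D)$ need not vanish. I would note this assumption at the outset of the divergence argument and then the limit is a one-line computation.
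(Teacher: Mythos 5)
Your proposal is correct and follows essentially the same route as the paper: the first bullet is read off from Proposition~\ref{prop:cfd-nonneg}, and the second from the variance decomposition, where the paper simply asserts that the displacement terms dominate $\sigma_{AB}^2$. Your version is in fact more careful, since the closed form $\sigma_{AB}^2 = S + w_A w_B\|\mu_A-\mu_B\|^2$ makes that domination precise and your remark that $S$, $w_A$, $w_B$ must be held fixed and $S>0$ fills an assumption the paper leaves implicit.
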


\begin{proof}
The first case follows from Proposition~\ref{prop:cfd-nonneg}. For the second case, the displacement terms dominate $\sigma_{AB}^2$, driving $\mathrm{CFS} \to 0$ and hence $\mathrm{CFD} \to \infty$.
\end{proof}

\newpage

\renewcommand{\thefigure}{B.\arabic{figure}}
\setcounter{figure}{0}
\renewcommand{\thetable}{B.\arabic{table}}
\setcounter{table}{0}

\section{Synthetic Experiments}\label{app:synthetic}

This appendix reports a comprehensive suite of synthetic experiments designed to validate the theoretical properties of the proposed Cross-Fusion Distance (CFD) under controlled conditions. In particular, these experiments isolate and manipulate fusion-altering and fusion-preserving variations independently, allowing for a clean empirical assessment of CFD relative to commonly used distributional distance measures.

Unless otherwise stated, all experiments use $R=50$ Monte Carlo repetitions with a shared seed schedule (seed $=42+\text{trial}$). We evaluate across latent dimensionalities $d \in \{32,128\}$ and sample sizes $n \in \{300,1000\}$ per group. No feature normalization is applied, as CFD is defined directly on raw latent representations.

We compare CFD against the following baseline measures: Wasserstein distance (exact optimal transport for $n = 300$, Sinkhorn approximation with regularization parameter $0.1$ for $n = 1000$), Maximum Mean Discrepancy (MMD) with an RBF kernel and median heuristic bandwidth, Hausdorff distance, and symmetric Chamfer distance.

\subsection{Experiments Setup}

\subsubsection{Sensitivity to Geometric Displacement}

We evaluate sensitivity to geometric displacements by sampling two groups from identical distributions with a controlled translational offset. Specifically, we draw $A \sim P$, where $P$ is a $K=4$ Gaussian mixture with equal weights and spherical covariance, and define $B \sim P + \Delta$, where $\Delta = \delta e_1$ shifts all samples along the first coordinate. The displacement magnitude is varied over $\delta \in \{0.25, 0.5, 1, 2, 3, 4\}$. This construction preserves within-group topology while inducing varied inter-group separation.

\subsubsection{Sensitivity to Dispersion Variations}

We evaluate sensitivity to within-group dispersion by sampling two latent groups whose centroids are held at a fixed translational offset  \(\delta = 10.0\) while their spread is systematically varied. Specifically, we draw two groups \(A\) and \(B\) from \(K=4\) Gaussian mixture distributions with equal component weights. Let the mixture component means be \(\{\mu_k\}_{k=1}^K \subset \mathbb{R}^d\). We sample

\[
A \sim P(\sigma), \qquad B \sim P(\sigma)+\delta e_1,
\]
where
\[
P(\sigma) \;=\; \frac{1}{K}\sum_{k=1}^K \mathcal{N}\!\left(\mu_k,\; \sigma^2 I\right),
\]

and \(I\) denotes the identity covariance matrix. 

The dispersion parameter \(\sigma\) is then varied over a predefined grid. As \(\sigma\) increases, each group becomes more dispersed, resulting in a broader spread that induces greater overlap or fusion between the two point clouds. This setup isolates separability changes arising purely from variance inflation rather than geometric displacement. All distance measures are computed on the paired samples \((A,B)\) at each dispersion level, with results averaged over multiple Monte Carlo repetitions to reduce stochastic variability.

\subsubsection{Robustness to Global Scaling}

We evaluate robustness to global scaling by applying a uniform multiplicative transformation to both latent groups. Specifically, we first generate two groups \(A \sim P\) and \(B \sim Q\) in \(\mathbb{R}^d\) following the same procedure as in the displacement experiments, with fixed distributions and a given level of overlap between the groups. We then construct scaled versions of these groups as

\[
A^{(\alpha)} = \alpha A, \qquad B^{(\alpha)} = \alpha B,
\]

where \(\alpha > 0\) is a global scaling factor applied identically to all points in both groups. The scaling ratio \(\alpha\) is varied over a predefined range (e.g., \(\alpha \in \{1, 2, 4, 8\}\)).

By construction, global scaling preserves all relative distances, angular relationships, and overlap proportions between \(A\) and \(B\). Consequently, the effective separability and degree of cloud fusion remain unchanged across scaling ratios. This experiment therefore isolates whether a metric is sensitive to absolute coordinate magnitude. All metrics are computed on the scaled pairs \((A^{(\alpha)}, B^{(\alpha)})\) for each \(\alpha\), and results are averaged over repeated Monte Carlo trials to mitigate sampling variability.

\subsubsection{Robustness to Topological Deformation}

We evaluate robustness to topological deformation by modifying the internal structure of the latent groups while explicitly holding their centroids fixed. Specifically, we construct two groups \(A\) and \(B\) in \(\mathbb{R}^d\) such that

\[
\mu_A = \mu_B = \mathbf{0},
\]

ensuring identical centroids by design. Group A is set with a fixed unimodal internal structure and group B introduces further deformations by progressively varying the number of mixture components of Gaussian mixture models.

Formally, for a given deformation level \(K\), we define

\[
B \sim P_K,
\]

where

\[
P_K \;=\; \frac{1}{K}\sum_{k=1}^K \mathcal{N}\!\left(\mu_k,\; \sigma^2 I\right),
\]

and the component means \(\{\mu_k\}_{k=1}^K\) are arranged symmetrically in \(\mathbb{R}^d\) such that \(\sum_{k=1}^K \mu_k = \mathbf{0}\). This symmetric construction guarantees that increasing the modality count alters local shape and internal structure while preserving the global centroid and overall dispersion. The deformation strength is controlled by varied number of mixture components over \(K \in \{1,2,4,8\}\), with \(\sigma\) held fixed across all settings.

As \(K\) increases, the latent distribution transitions from unimodal to increasingly multi-modal, introducing intrinsic topological variation without materially changing the overall extent of overlap between the two groups. This setup therefore isolates whether a metric responds robustly to structural deformation alone, rather than to entangled changes with inter-group geometric displacements or group dispersion variations. All metrics are evaluated on the paired samples \((A,B)\) for each value of \(K\), with results averaged over multiple Monte Carlo realizations to reduce sampling variability.

\subsubsection{Robustness to Outliers}

We evaluate robustness to outliers by contaminating each latent group with a small fraction of extreme observations while preserving the main mass, centroid structure, and effective overlap of the two clouds. This experiment is designed to isolate whether a distance measure disproportionately reacts to sparse, heavy-tailed noise that does not materially alter cloud fusion.

Specifically, we revisit the geometric displacement setting with a fixed translational offset \(\delta = 2.0\). Let the uncontaminated groups be generated as

\[
A \sim P, \qquad B \sim P + \delta e_1,
\]

where \(P\) is a \(K=4\) Gaussian mixture with equal weights and spherical covariance, as defined previously. We then introduce outliers by replacing an \(\epsilon\%\) fraction of points in each group with samples drawn independently from a wide Gaussian distribution,

\[
\mathcal{N}(\mathbf{0}, \tau^2 I),
\]

with scale parameter fixed at \(\tau = 5.0\). The contamination level is varied over

\[
\epsilon \in \{0, 1, 2, 5\}.
\]

By construction, the majority of samples in both \(A\) and \(B\) continue to follow the original displaced mixture distribution, and the degree of overlap between the two primary clouds remains largely unchanged. Increasing \(\epsilon\) therefore introduces progressively heavier tails without modifying the dominant geometry or fusion of the point clouds. All metrics are computed on the contaminated pairs \((A_\epsilon, B_\epsilon)\) for each outlier fraction, with results averaged over repeated Monte Carlo draws to reduce sampling variability.

\subsection{Additional Synthetic Results Across Dimensions and Sample Sizes}

To assess the robustness of our findings beyond the main experimental configuration, we report additional synthetic evaluations under varying latent dimensionalities and sample sizes. The following figures present results for \(d \in \{32,128\}\) and \(n \in \{300,1000\}\), covering a broad range of regimes commonly encountered in representation learning. Across all settings, the qualitative behavior of the evaluated metrics remains consistent with the results represented in Section 4. These additional experiments confirm that the advantages of CFD are not artifacts of a specific dimensionality or sample size, but persist across substantially different latent-space regimes.

\begin{figure*}[ht]
  \vskip 0.2in
  \begin{center}
    \centerline{\includegraphics[width=\columnwidth]{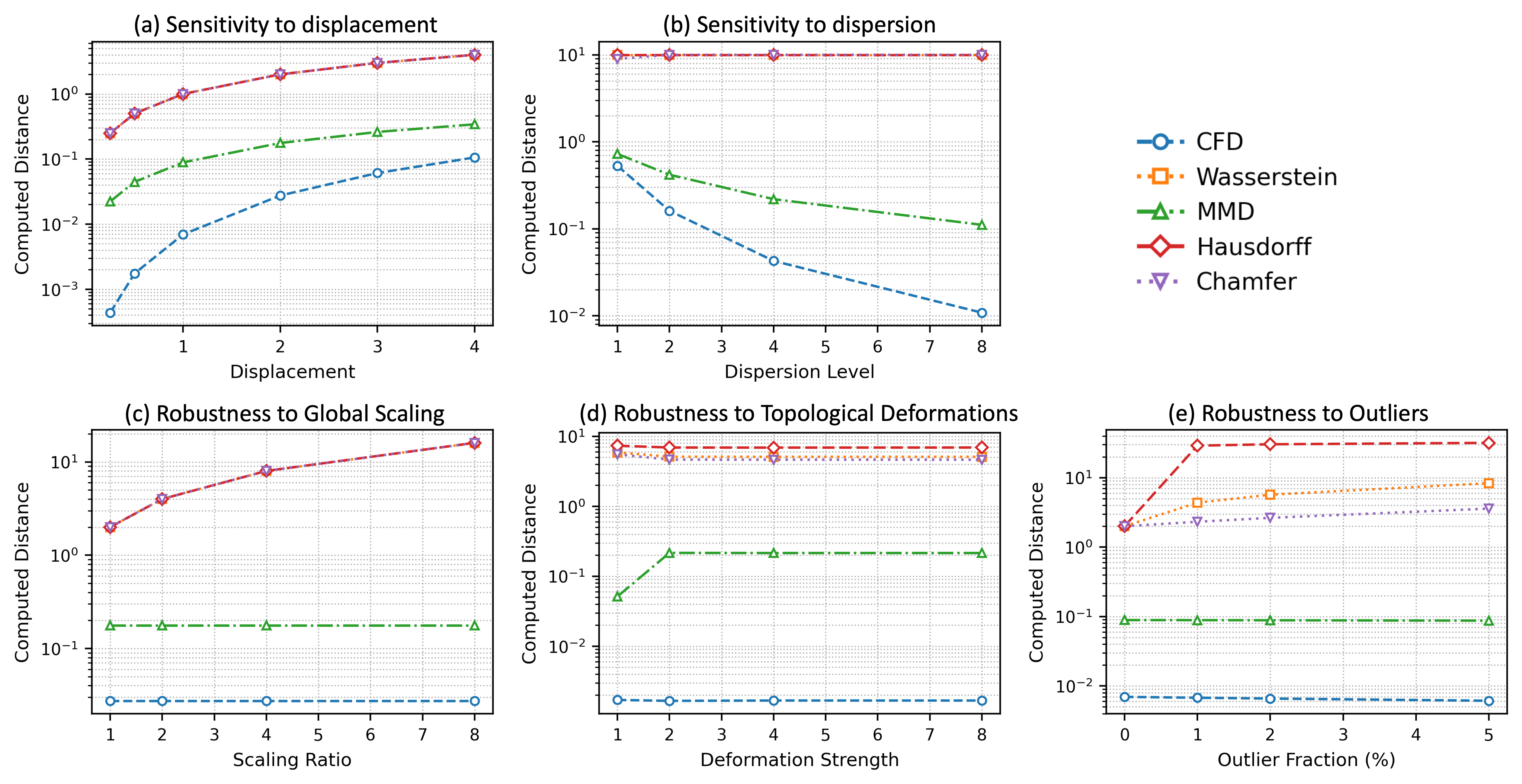}}
    \caption{Synthetic evaluation of distance measures for latent dimensionalities $d =32$ and sample sizes $n=300$.}
  \end{center}
\end{figure*}

\begin{figure*}[ht]
  \vskip 0.2in
  \begin{center}
    \centerline{\includegraphics[width=\columnwidth]{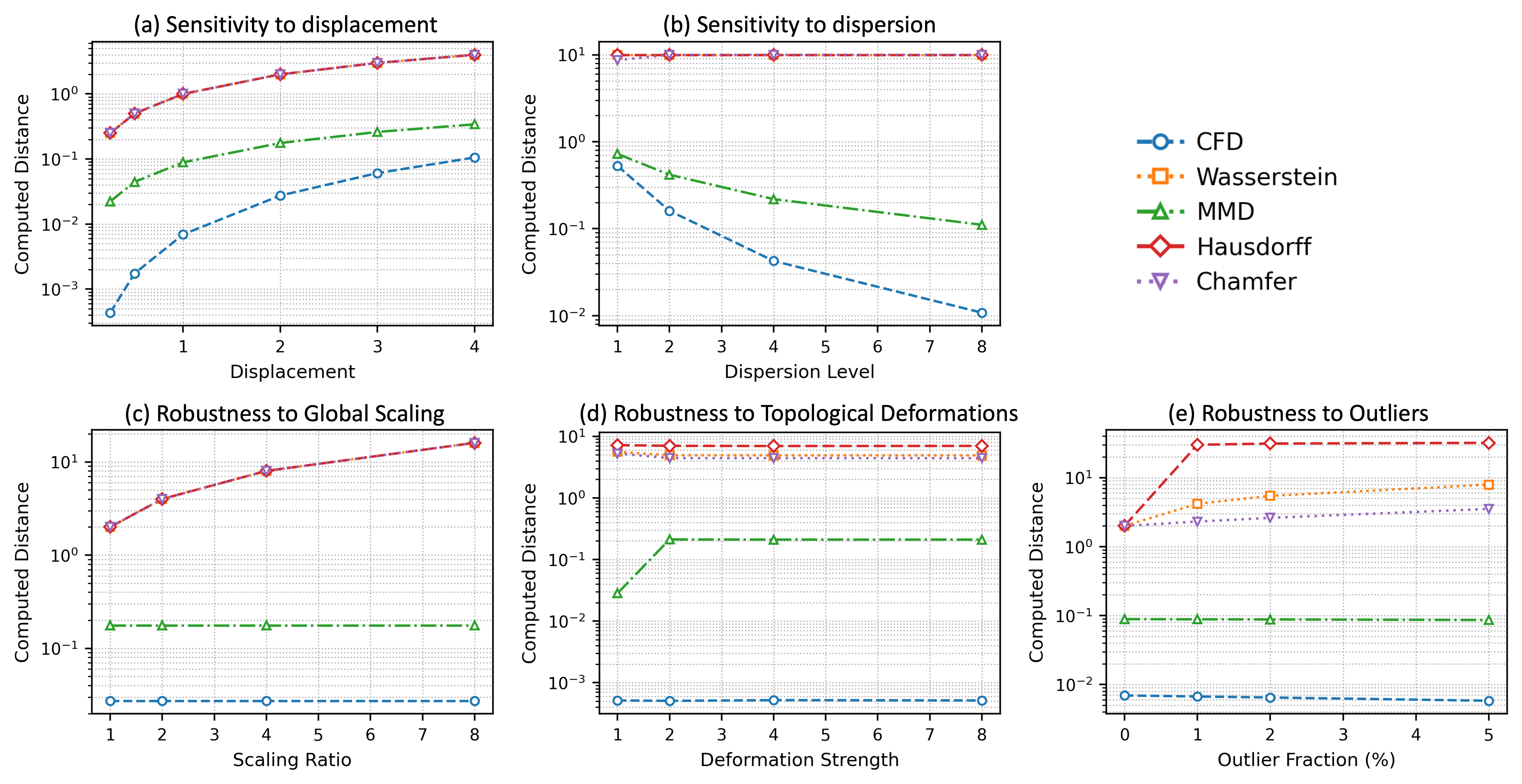}}
    \caption{Synthetic evaluation of distance measures for latent dimensionalities $d =32$ and sample sizes $n=1000$.}
  \end{center}
\end{figure*}

\begin{figure*}[ht]
  \vskip 0.2in
  \begin{center}
    \centerline{\includegraphics[width=\columnwidth]{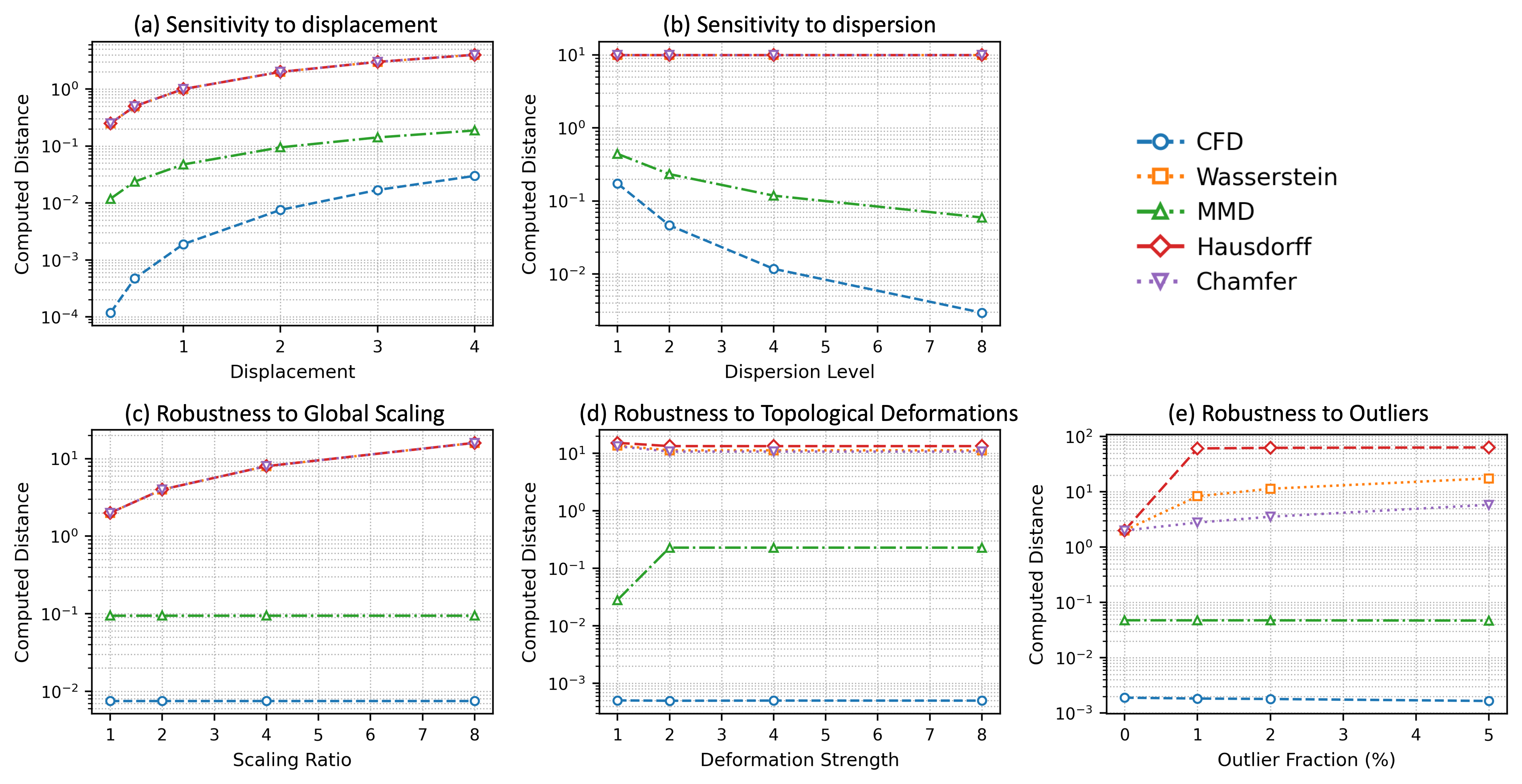}}
    \caption{Synthetic evaluation of distance measures for latent dimensionalities $d =128$ and sample sizes $n=1000$.}
  \end{center}
\end{figure*}

\renewcommand{\thefigure}{C.\arabic{figure}}
\setcounter{figure}{0}
\renewcommand{\thetable}{C.\arabic{table}}
\setcounter{table}{0}

\section{Full Results for Ground-Truth–Anchored Validation on Real Histopathology Data}
\label{appendix:full-ground-truth}

This appendix reports the complete distance values underlying the Relative Distance Ratio (RDR) analysis presented in the main text. While the main paper focuses on normalized RDR scores to facilitate cross-metric comparison, the raw distances reported here provide transparency into the absolute scale and variability of each metric under both no-shift and batch-shift conditions.

For each dataset, we separately report (i) distances between two randomly sampled subsets drawn from the same acquisition pipeline, which serve as ground-truth zero-distance references, and (ii) distances between datasets acquired under distinct staining protocols or scanner hardware, which constitute the true batch-effect conditions used as RDR denominators. All values are reported as mean $\pm$ standard deviation over repeated random splits.

These full results confirm that differences in RDR across metrics are not driven by trivial scale effects or noise amplification. Instead, they reflect systematic differences in how metrics respond to within-domain variability versus inter-domain displacement. In particular, metrics that assign substantial nonzero distance to within-dataset splits inevitably yield inflated RDR values, even when cross-dataset distances are large. By contrast, CFD maintains near-zero distances under the ground-truth condition while remaining sensitive to true batch shifts, leading to consistently well-calibrated RDR scores.

\begin{table}[htbp]
\centering
\caption{Within-dataset distances on Camelyon16 computed between two randomly sampled subsets drawn from the same acquisition source. These pairs constitute ground-truth zero-distance references for RDR evaluation. Values are reported as mean $\pm$ standard deviation over repeated splits.}
\begin{tabular}{lcc}
\toprule
\textbf{Distance Metric} & \textbf{2 Subsets from RAD} & \textbf{2 Subsets from UNI} \\
\midrule
CFD        & $0.00027 \pm 0.00005$ & $0.00025 \pm 0.00006$ \\
W2         & $16.8032 \pm 0.16108$    & $17.8574 \pm 0.14284$   \\
MMD        & $0.02011 \pm 0.00281$   & $0.01929 \pm 0.00193$  \\
Hausdorff & $40.8993 \pm 1.84970$    & $43.38930 \pm 2.25711$    \\
Chamfer   & $14.8684 \pm 0.05756$    & $15.2671 \pm 0.09466$   \\
\midrule
\textbf{Ground Truth} & \textbf{0} & \textbf{0} \\
\bottomrule
\end{tabular}
\end{table}

\begin{table}[htbp]
\centering
\caption{Cross-dataset distances between RAD and UNI on Camelyon16, representing true batch-shift conditions used as the RDR denominator. Values are reported as mean $\pm$ standard deviation over repeated trials.}
\begin{tabular}{lccccc}
\toprule
\textbf{Dataset Pair} & \textbf{CFD} & \textbf{W2} & \textbf{MMD} & \textbf{Hausdorff} & \textbf{Chamfer} \\
\midrule
RAD $\leftrightarrow$ UNI
& $0.1295 \pm 0.0022$
& $31.3701 \pm 0.0679$
& $0.3649 \pm 0.0031$
& $44.4162 \pm 0.7032$
& $23.8412 \pm 0.1632$ \\
\bottomrule
\end{tabular}
\end{table}

\begin{table}[htbp]
\centering
\caption{Within-dataset distances on MIDOG21 computed between two randomly sampled subsets from the same scanner type (ACS, HS360, or HXR). These pairs serve as ground-truth zero-distance references for RDR evaluation. Mean $\pm$ standard deviation over repeated splits.}
\begin{tabular}{lccc}
\toprule
\textbf{Distance Metric} 
& \textbf{2 Subsets from ACS} 
& \textbf{2 Subsets from HS360} 
& \textbf{2 Subsets from HXR} \\
\midrule
CFD 
& $0.00101 \pm 0.00028$ 
& $0.00067 \pm 0.00016$ 
& $0.00112 \pm 0.00033$ \\
W2 
& $13.0811 \pm 0.10889$ 
& $10.2180 \pm 0.08144$ 
& $16.4262 \pm 0.19541$ \\
MMD 
& $0.03747 \pm 0.00917$ 
& $0.03377 \pm 0.00453$ 
& $0.04046 \pm 0.00772$ \\
Hausdorff 
& $29.0148 \pm 1.59810$ 
& $25.2751 \pm 1.89270$ 
& $30.7773 \pm 1.12900$ \\
Chamfer 
& $11.4366 \pm 0.03994$ 
& $8.76662 \pm 0.02203$ 
& $13.7220 \pm 0.04921$ \\
\midrule
\textbf{Ground Truth} 
& \textbf{0} 
& \textbf{0} 
& \textbf{0} \\
\bottomrule
\end{tabular}
\end{table}

\begin{table}[htbp]
\centering
\caption{Cross-scanner distances on MIDOG21 between ACS--HS360 and ACS--HXR, representing true batch-effect conditions used as the RDR denominator. Mean $\pm$ standard deviation over repeated trials.}
\begin{tabular}{lccccc}
\toprule
\textbf{Dataset Pair} 
& \textbf{CFD} 
& \textbf{W2} 
& \textbf{MMD} 
& \textbf{Hausdorff} 
& \textbf{Chamfer} \\
\midrule
ACS $\leftrightarrow$ HS360
& $0.517 \pm 0.003$
& $37.721 \pm 0.031$
& $0.719 \pm 0.001$
& $36.242 \pm 0.456$
& $18.938 \pm 0.069$ \\
ACS $\leftrightarrow$ HXR
& $0.340 \pm 0.003$
& $36.164 \pm 0.100$
& $0.598 \pm 0.002$
& $39.649 \pm 0.209$
& $26.238 \pm 0.064$ \\
\bottomrule
\end{tabular}
\end{table}


\end{document}